\newcommand{\argmax}{\operatornamewithlimits{argmax}}
\providecommand{\norm}[1]{\bigl\lVert#1\bigl\rVert}
\newcommand{\wtop}{w^{\top}}
\newcommand{\bu}{\bar{u}}
\newcommand{\Y}{\mathcal{Y}}
\newcommand{\X}{\mathcal{X}}
\newcommand{\E}{\mathbb{E}}
\newcommand{\R}{\mathbb{R}}
\newcommand{\V}{\mathbb{V}}
\newcommand{\N}{\mathcal{N}}
\newcommand{\bigO}{\mathcal{O}}
\newtheorem{thm}{Theorem} 
\newtheorem{lemma}{Lemma}
\newtheorem{corr}{Corollary}
\title{Sparse Stochastic Zeroth-Order Optimization with an Application to Bandit Structured Prediction}
\author{
  Artem Sokolov\\
  Amazon Research \\
  \& Heidelberg University, Germany\\
  \texttt{sokolov@cl.uni-heidelberg.de}
  \And
  Julian Hitschler\\
  Computational Linguistics\\
  Heidelberg University, Germany\\
  \texttt{hitschler@cl.uni-heidelberg.de}
  \And
  Mayumi Ohta\\
  Computational Linguistics\\
  Heidelberg University, Germany\\
  \texttt{ohta@cl.uni-heidelberg.de}
  \And
  Stefan Riezler\\
  Computational Linguistics \& IWR\\
  Heidelberg University, Germany\\
  \texttt{riezler@cl.uni-heidelberg.de}
}
\begin{document}

\maketitle

\begin{abstract}
  Stochastic zeroth-order (SZO), or gradient-free, optimization allows to optimize arbitrary functions by relying only on function evaluations under parameter perturbations, however, the iteration complexity of SZO methods suffers a factor proportional to the dimensionality of the perturbed function. We show that in scenarios with natural sparsity patterns as in structured prediction applications, this factor can be reduced to the expected number of active features over input-output pairs. We give a general proof that applies sparse SZO optimization to Lipschitz-continuous, nonconvex, stochastic objectives, and present an experimental evaluation on linear bandit structured prediction tasks with sparse word-based feature representations that confirm our theoretical results.
\end{abstract}

\section{Introduction}
\label{sec:intro}
Random gradient-free methods \cite{NesterovSpokoiny:15} provide a simple approach to optimization by applying Gaussian smoothing to an arbitrary function, thus establishing Lipschitz-continuity of the gradient of the smoothed function, and allowing to approximate the gradient by comparisons of function values obtained at randomly perturbed parameter vectors. In optimization theory, related techniques are also named ``zeroth-order'' \cite{GhadimiLan:12,DuchiETAL:15} or ``derivative-free'' \cite{JamiesonETAL:12,Shamir:13} methods since rather than first- or second-order derivatives, only function values are accessible. The main advantage of stochastic zeroth-order (SZO) methods lie in their applicability to optimization of non-differentiable functions, or in black box situations where nothing but function values are available. The main disadvantage compared to stochastic first-order (SFO), or gradient-based, techniques is a dependency of the convergence speed on the dimensionality of the function to be evaluated.

In machine learning, obtaining a noisy realization of reward or loss function values at a proposed parameter perturbation corresponds to learning with ``bandit feedback'' \cite{FlaxmanETAL:05,AgarwalETAL:10}. These methods have found renewed interest in the area of reinforcement learning, notably in applications where gradient information is available and SFO techniques would be applicable. The cited advantages of SZO methods for reinforcement learning are their simplicity and robustness against hyperparameter changes \cite{ManiaETAL:18}, the fact that they are highly parallelizable and do not require backpropagation \cite{SalimansETAL:17}, or the improved exploration behavior in parameter space (instead of in action space) \cite{SehnkeETAL:10}.

The goal of this paper is to show that the bottleneck of SZO techniques---the dependency of convergence speed on the dimensionality of the parameter space being perturbed---can be reduced to the expected number of active features over input-output pairs in structured prediction with sparse feature spaces. Such sparsity patterns can be found in sequence labeling tasks that represent sequences by context word representations. An example is the task of noun-phrase chunking where a natural sparsity pattern is defined by n-grams of input words and output chunk labels that can possibly occur in given input-output pairs. Our paper starts with a general proof that adapts the work of \cite{NesterovSpokoiny:15} to the case of sparse parameter perturbation for Lipschitz-continuous (but not necessarily Lipschitz-smooth), nonconvex, stochastic functions. We present three algorithms that instantiate our theory to structured prediction applications, one based on \cite{NesterovSpokoiny:15}'s standard two-point function evaluation method, and two methods that aim to reduce feedback complexity to a comparison of function values, or to a comparison of one-point feedback to a running average, with applications to human feedback in mind. In our experimental evaluation, we compare our SZO techniques to standard SFO techniques for bandit structured prediction tasks from natural language processing \cite{SokolovETALnips:16}. Our experimental results on the task of noun-phrase chunking show that the convergence speed of all compared SZO techniques is improved by applying sparse parameter perturbations, reaching a performance close to the standard SFO technique. Furthermore, we present an experimental evaluation on the task of statistical machine translation that investigates the use of a non-differentiable maximum-a-posteriori (MAP) criterion at training and test time in SZO techniques---something that is not possible in SFO methods. On this task our best SZO result outperforms the standard SFO technique, pointing to another possible advantage of SZO techniques.

\section{Related Work}
\label{sec:related}
Zeroth-order, or gradient-free, stochastic optimization dates back to the finite-difference method for gradient estimation of \cite{KieferWolfowitz:52} where the value of each component of $w$ is perturbed separately while holding the other components at nominal value. This technique has since been replaced by more efficient methods based on simultaneous perturbation of all weight vector components \cite{Spall:92,Spall:03,KushnerYin:03}. More recent developments apply the simultaneous perturbation principle to bandit learning \cite{FlaxmanETAL:05,YueJoachims:09,AgarwalETAL:10,DuchiETAL:15} and extend it from (strongly) convex function to non-convex functions \cite{GhadimiLan:12,NesterovSpokoiny:15}. To our knowledge, the application to structured prediction problems, especially regarding sparse perturbations in active feature space, is novel.

Connections of SZO methods to evolutionary algorithms and reinforcement learning have first been described in \cite{Spall:03}. Recent work has applied SZO techniques successfully to policy gradient methods for deep reinforcement learning \cite{SehnkeETAL:10,SalimansETAL:17,PlappertETAL:18,ManiaETAL:18}.
The framework of bandit structured prediction \cite{SokolovETALnips:16,KreutzerETAL:17,NguyenETAL:17,BahdanauETAL:17} is closely related to policy gradient techniques in reinforcement learning \cite{Williams:92,SuttonETAL:00,KondaTsitsiklis:00}, for example, linear bandit structured prediction is termed ``gradient bandits'' in \citet{SuttonBarto:17}. Earlier work in structured prediction has applied SZO techniques for expected loss minimization (a.k.a. minimum risk training) \cite{McAllesterKeshet:11,KeshetETAL:11}.

The quality of SZO methods to provide improved exploration behavior at reduced variance is also appreciated in recent reparameterization approaches. Here a loss function is reparameterized via Gaussian smoothing, and loss values obtained under latent variable perturbations are combined with backpropagation of first-order derivatives \cite{KingmaWelling:14,JangETAL:17,PlappertETAL:18}. Our work could provide an alternative to the combination of SZO and SFO techniques to an end-to-end application of SZO techniques in reparameterization scenarios.

\section{Sparse Zeroth-Order Stochastic Optimization for Nonconvex Objectives}
\label{sec:theory}
In the following, we give a theoretical analysis of gradient-free optimization by sparse parameter perturbation for Lipschitz-continuous (but not necessarily Lipschitz-smooth), nonconvex, and stochastic objectives. Our analysis builds on \cite{NesterovSpokoiny:15}.

\subsection{Problem Statement}

We would like to solve a stochastic optimization problem
\begin{align}
 \min_w f(w), \text{ where } f(w) := \E_x[F(w,x)],
  \label{eq:szo-objective}
\end{align}
and $\E_x$ denotes the expectation over inputs $x \in \X$, and $w \in \R^n$ is the dimensionality of the weight vector parameterizing the objective function.
We address the the general case of non-convex functions $F$ for which we furthermore assume Lipschitz-continuity\footnote{We use \cite{Nesterov:04}'s notation of function classes where $C^{k,p}$ denotes the class of $k$ times differentiable functions whose $p$th derivative is Lipschitz continuous.}, i.e.,  $F(w,x) \in C^{0,0}$
iff
\begin{align}
  |F(w,x) - F(w',x)| \leq L_0(F(\cdot,x)) \norm{w - w'}, \forall w, w', x.
  \label{eq:lipschitz}
\end{align}

\cite{NesterovSpokoiny:15} show how to achieve a smooth version of an arbitrary function $f(w)$ by Gaussian blurring that assures continuous derivatives everywhere in its domain. In their work, random perturbation of parameters is based on $n$-dimensional Gaussian random vectors $u$ from a zero-mean isotropic multivariate Gaussian with unit $n \times n$ covariance matrix $\Sigma = I$ s.t.
\begin{align}
\N(u)=\frac{1}{\sqrt{(2\pi)^n\cdot \det\Sigma}} e^{-\frac{1}{2}u^\top\Sigma^{-1}u},
\label{eq:gaussian}
\end{align}
and a Gaussian approximation of $f(w)$ is defined as $f_\mu(w) = \E_{u}[f(w + \mu u)]$, where $\mu \geq 0$ is a smoothing parameter. Furthermore, a Lipschitz-continuous gradient even for a non-differentiable original function $f$ can be given by applying standard differentiation rules to $f_\mu(w)$, yielding $\nabla_w f_\mu(w) = \E_{u}[\frac{f(w + \mu u) - f(w)}{\mu}u]$ (see \cite{NesterovSpokoiny:15}, eq. (21)).

In the case of linear stochastic structured prediction, the functional $f(w)$ is an expectation over inputs $x$, and the function $F(w,x)$ is defined with respect to a linear model $\wtop \phi(x,y)$ where $\phi: \X\times\Y\rightarrow\R^n$ is a joint, possibly sparse, feature representation on inputs and outputs. We express sparsity in parameter perturbation by restricting the Gaussian random vector $u \in \R^n$ to the active, i.e. non-zero, features for each input $x$, where $\restriction$ is the restriction operator s.t.
\begin{align}
  \N(u,x) = \N(u)\restriction_{u_i \text{ s.t. } \exists y \in Y(x), \phi_i(x,y) \neq 0}.
  \label{eq:sparse-gaussian}
\end{align}
We denote by $\bu \in \R^n$ the Gaussian random vector resulting from a sparse perturbation, and by
\begin{align}
  \bar{n}(x) := \norm{\bu}_0 = |\bu_1|^0 + \ldots + |\bu_n|^0, \text{ where } 0^0 = 0,
  \label{eq:zero-norm}
\end{align}
the effective number of parameter perturbations for an input $x$. Based on the notion of sparse perturbation vectors $\bu \in \R^n$, we can redefine the Gaussian approximation as
\begin{align}
  f_\mu(w) = \E_{\bu}[f(w + \mu \bu)].
\label{eq:f_mu_bu}
\end{align}

\subsection{Convergence Analysis}

Our first Lemma applies standard differentiation rules to the the continuous function $f_\mu(w)$ in eq. \eqref{eq:f_mu_bu}, yielding a Lipschitz-continuous gradient. A full proof, adapting the calculations given in \cite{NesterovSpokoiny:15}, eq. (21), to our case is given in the supplementary material.

\begin{lemma}
  \label{lemma:gradients}
  \begin{align}
    \nabla_w f_\mu(w) & = \E_{\bu,x}[\frac{F(w + \mu \bu,x) - F(w,x)}{\mu}\bu]. \label{eq:2-point}
    \end{align}
  \end{lemma}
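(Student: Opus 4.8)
The plan is to follow the derivation of \cite{NesterovSpokoiny:15}, eq.~(21), but to carry it out on the $x$-dependent active subspace rather than on all of $\R^n$. First I would unfold the definitions, writing
\begin{align}
 f_\mu(w) = \E_x\E_{\bu}[F(w+\mu\bu,x)], \nonumber
\end{align}
where for each fixed input $x$ the inner expectation is taken with respect to the restricted Gaussian $\N(u,x)$ from eq.~\eqref{eq:sparse-gaussian}, i.e.\ an isotropic zero-mean Gaussian supported on the $\bar{n}(x)$ active coordinates of $x$. Since the inactive coordinates of $\bu$ are deterministically zero and, in the linear model $\wtop\phi(x,y)$, the function $F(\cdot,x)$ does not depend on $w_i$ for features $i$ inactive at $x$ (because $\phi_i(x,y)=0$ for all relevant $y$), both the true gradient component and the estimator vanish on inactive coordinates, so it suffices to differentiate the inner expectation as an integral over the active subspace $\R^{\bar{n}(x)}$.

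The central trick, which avoids assuming differentiability of $F$ itself, is to move the derivative off $F$ and onto the smooth Gaussian density by a change of variables. For fixed $x$, I would write the inner expectation as $\frac{1}{\kappa}\int F(w+\mu u,x)\,e^{-\frac{1}{2}\norm{u}^2}\,du$, with $\kappa$ the normalizer of the restricted Gaussian, and substitute $y=w+\mu u$ so that $w$ appears only in the exponential. Differentiating under the integral sign, the gradient then acts only on $e^{-\frac{1}{2\mu^2}\norm{y-w}^2}$, producing a factor $\tfrac{1}{\mu^2}(y-w)$; undoing the substitution yields
\begin{align}
 \nabla_w \E_{\bu}[F(w+\mu\bu,x)] = \E_{\bu}\left[\frac{F(w+\mu\bu,x)}{\mu}\,\bu\right]. \nonumber
\end{align}
Because the restricted Gaussian has zero mean, $\E_{\bu}[\bu]=0$, so subtracting the term $\frac{F(w,x)}{\mu}\bu$ (constant in $\bu$) leaves the expectation unchanged, giving the two-point form $\E_{\bu}[\frac{F(w+\mu\bu,x)-F(w,x)}{\mu}\bu]$. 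Taking the outer expectation $\E_x$ and interchanging it with $\nabla_w$ then delivers the claimed identity \eqref{eq:2-point}.

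The main obstacle I expect is rigorously justifying the two interchanges of differentiation and expectation (under $\E_{\bu}$ and under $\E_x$). Here the assumption $F(\cdot,x)\in C^{0,0}$ is what I would lean on: the Lipschitz bound in eq.~\eqref{eq:lipschitz} controls the relevant difference quotients by $L_0(F(\cdot,x))\norm{\bu}$, which is integrable against the Gaussian density (its moments being finite), so dominated convergence licenses differentiation under the integral even though $f$ is not itself differentiable. A secondary technical point is that the effective dimension $\bar{n}(x)$, and hence the support of the Gaussian, varies with $x$; I would handle this by performing the density-differentiation argument separately for each fixed $x$ on its own active subspace and only afterwards averaging over $x$, so that the varying dimensionality never interferes with the change of variables.
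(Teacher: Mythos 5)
Your proposal follows essentially the same route as the paper's own proof: differentiate under the integral, use the substitution $y = w + \mu\bu$ to shift the derivative onto the Gaussian density (yielding the factor $\frac{\bu}{\mu}$ after undoing the substitution), and then subtract the constant term $\frac{F(w,x)}{\mu}\bu$ using $\E_{\bu}[\bu]=0$. Your added remarks on dominated convergence and on carrying out the argument per-$x$ on the active subspace are sensible refinements of details the paper leaves implicit, but they do not change the argument.
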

\cite{NesterovSpokoiny:15} show that for Lipschitz-continuous functions $F$, the variance of the gradient approximation can be bounded by the Lipschitz constant and by the norm of the random perturbation.The term $\E_{u}[\norm{u}^p]$ can itself be bounded by a function of the exponent $p$ and the dimensionality $n$ of the function space. This is how the dependency on $n$ enters iteration complexity bounds and where an adaptation to sparse perturbations has to chime in.
In order to adapt \cite{NesterovSpokoiny:15}'s bounds to sparse gradient-free optimization, we first need to match their bounds on the norms of the random perturbation. The simple case of the squared norm of the random perturbation given below illustrates the idea. If a coordinate $i$ in feature space is not perturbed, no variance $\V_{\bu}[\bu_i]$ in incurred. The smaller the variance, the smaller the factor $\bar{n}(x)$ that directly influences iteration complexity bounds:
\begin{align*}
\E_{\bu}[\norm{\bu}^2]  & = \E_{\bu}[\bu_1^2 + \bu_2^2 + \ldots + \bu_n^2] 
     = \E_{\bu}[\bu_1^2] + \E_{\bu}[\bu_2^2] + \ldots + \E_{\bu}[\bu_n^2] \\
     & = \V_{\bu}[\bu_1] + \V_{\bu}[\bu_2] + \dots + \V_{\bu}[\bu_n] 
     = \bar{n}(x).
\end{align*}

For the general case of $\E_{\bu}[\norm{\bu}^p]$, $p \geq 2$, we match \cite{NesterovSpokoiny:15}'s equation (17). We shorten the proof to the parts relevant to sparse perturbations.

\begin{lemma}
  \label{lemma:u_p}
  \begin{align}
    \E_{\bu}[\norm{\bu}^p] \leq (p + \bar{n}(x))^{p/2}.
    \end{align}
  \end{lemma}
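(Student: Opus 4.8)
The plan is to reduce the sparse moment to an ordinary Gaussian moment in the \emph{effective} dimension $\bar{n}(x)$ and then reproduce the estimate of \cite{NesterovSpokoiny:15}, eq. (17), with $n$ replaced by $\bar{n}(x)$. First I would note that, by the restriction in eq. \eqref{eq:sparse-gaussian}, the perturbation $\bu$ has exactly $\bar{n}(x)$ nonzero coordinates, each an independent standard normal, while the remaining $n-\bar{n}(x)$ coordinates are identically zero and contribute nothing to the Euclidean norm. Hence $\norm{\bu}^2 = \sum_{i \text{ active}} \bu_i^2$ is a sum of $\bar{n}(x)$ squared independent standard normals, and $\E_{\bu}[\norm{\bu}^p]$ coincides exactly with the $p$-th moment of the norm of a standard Gaussian vector living in $\bar{n}(x)$ dimensions. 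This is the only place where sparsity enters; everything that follows is the unchanged one-dimensional-radius computation of \cite{NesterovSpokoiny:15}.

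Next I would evaluate this moment in closed form. Writing $m := \bar{n}(x)$ and passing to spherical coordinates on $\R^m$, one obtains
\[
  \E_{\bu}[\norm{\bu}^p] = \frac{1}{(2\pi)^{m/2}}\int_{\R^m}\norm{v}^p\, e^{-\frac{1}{2}\norm{v}^2}\,dv = 2^{p/2}\,\frac{\Gamma\!\left(\frac{m+p}{2}\right)}{\Gamma\!\left(\frac{m}{2}\right)},
\]
via the substitution $s = \tfrac{1}{2}\norm{v}^2$ in the radial integral. This reduces the claim to the Gamma-function inequality $\Gamma\!\left(\frac{m+p}{2}\right)/\Gamma\!\left(\frac{m}{2}\right) \le \left(\frac{m+p}{2}\right)^{p/2}$, since multiplying through by $2^{p/2}$ and using $2^{p/2}\left(\frac{m+p}{2}\right)^{p/2} = (m+p)^{p/2} = (p + \bar{n}(x))^{p/2}$ then gives exactly the stated bound.

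Finally I would establish the Gamma-ratio inequality. Setting $a = m/2$ and $b = p/2$, the goal is $\Gamma(a+b)/\Gamma(a) \le (a+b)^b$. For integer $b$ this is immediate, as $\Gamma(a+b)/\Gamma(a) = a(a+1)\cdots(a+b-1) < (a+b)^b$ because each factor is strictly below $a+b$; for general $b \ge 1$ I would interpolate between consecutive integer exponents using the log-convexity of $\Gamma$ (equivalently, Wendel's/Kershaw's inequality). I expect this last step to be the main obstacle: the simple product bound only covers integer $p$, and for $p \ge 2$ the map $t \mapsto t^{p/2}$ is convex, so the one-line Jensen argument available in the $p \le 2$ regime (which yields the weaker $\bar{n}(x)^{p/2}$) does not apply. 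Matching \cite{NesterovSpokoiny:15} here amounts to carrying their Gamma-ratio estimate through verbatim in the effective dimension $\bar{n}(x)$, so the substantive new content is confined to the reduction in the first paragraph.
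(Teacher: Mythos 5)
Your proof is correct, but it takes a genuinely different route from the paper's. The paper reproduces Nesterov--Spokoiny's Lemma 1 argument directly: it inserts a factor $e^{-\frac{\tau}{2}\bu^\top\bu}$, bounds $\norm{\bu}^p e^{-\frac{\tau}{2}\norm{\bu}^2}$ pointwise by $\bigl(\frac{p}{\tau e}\bigr)^{p/2}$, integrates the leftover Gaussian to get $(1-\tau)^{-\bar{n}(x)/2}$ --- this is exactly where sparsity enters, since the determinant of the restricted covariance only picks up the $\bar{n}(x)$ active coordinates --- and finally minimizes over $\tau\in(0,1)$. Your reduction to an ordinary Gaussian moment in the effective dimension $m=\bar{n}(x)$ captures the same (and only) sparsity-specific content, and arguably states it more cleanly than the paper's somewhat loose determinant step. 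From there you diverge: you compute the moment exactly as $2^{p/2}\,\Gamma\bigl(\frac{m+p}{2}\bigr)/\Gamma\bigl(\frac{m}{2}\bigr)$ and reduce the claim to $\Gamma(a+b)/\Gamma(a)\le (a+b)^b$, which is indeed provable (telescoping for integer $b$, Wendel's inequality or log-convexity of $\Gamma$ for the fractional part), whereas the paper never touches special functions and gets by with one elementary calculus inequality plus a one-variable optimization. Your route buys the exact value of the moment and a tighter view of where slack is lost, at the cost of importing a Gamma-ratio estimate; note also that in the only place the lemma is used (Theorem 1, with $p=4$, i.e.\ $b=2$) your simple product bound already suffices, so the interpolation machinery is needed only for the lemma in full generality. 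One small quibble: you call the Jensen bound $\bar{n}(x)^{p/2}$ ``weaker,'' but it is actually a smaller (tighter) constant --- the issue is that it is simply unavailable for $p\ge 2$, where Jensen points the wrong way.
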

\begin{proof}
  \begin{align*}
    \E_{\bu}[\norm{\bu}^p]
    & = (2\pi)^{-n/2}\int_{\bu} \norm{\bu}^p e^{-\frac{1}{2}{\bu}^\top {\bu}}\\
    & = (2\pi)^{-n/2}\int_{\bu} \norm{\bu}^p e^{-\frac{\tau}{2}{\bu}^\top {\bu}} e^{-\frac{1-\tau}{2}{\bu}^\top {\bu}}\\
    & \leq (2\pi)^{-n/2}\int_{\bu} \Big(\frac{p}{\tau e}\Big)^{p/2} e^{-\frac{1-\tau}{2}{\bu}^\top {\bu}}\\
    & = \Big(\frac{p}{\tau e}\Big)^{p/2} (2\pi)^{-n/2}\int_{\bu} e^{-\frac{1}{2}{\bu}^\top \big(\frac{I}{1-\tau}\big)^{-1}{\bu}}\\
    & = \Big(\frac{p}{\tau e}\Big)^{p/2} (2\pi)^{-n/2}\sqrt{(2\pi)^n\cdot \det\frac{I}{1-\tau}}\\
    & = \Big(\frac{p}{\tau e}\Big)^{p/2} \frac{1}{(1-\tau)^{\bar{n}(x)/2}}\\
    & \leq (p + \bar{n}(x))^{p/2}.
  \end{align*}
  The first inequality follows from $t^p e^{-\frac{\tau}{2}t^2} \leq \Big(\frac{p}{\tau e}\Big)^{p/2}$, for $t \geq 0$ (see \cite{NesterovSpokoiny:15}, eq.~(80)). The second inequality follows by minimizing the right-hand side in $\tau \in (0,1)$ (see \cite{NesterovSpokoiny:15}, Lemma 1).
\end{proof}
Lemma \ref{lemma:u_p} applies the idea illustrated above to higher order norms of random perturbations: If a coordinate is not perturbed, the determinant of the covariance matrix reduces to a product of variances of the active features. This allows us to bound the perturbation factor for each input by $\bar{n}(x) \ll n$. Our convergence theorem analyzes the following SZO algorithm with Sparse Perturbations (SZO-SP).

\begin{algorithm}[H]
  \caption{SZO-SP}
  \label{alg:SZO-SP}
  \begin{algorithmic}[0]
    \STATE Input: sequence of learning rates $h_k$, smoothing parameter $\mu > 0$.
    \STATE Initialize: $w_0 = 0$
    \FOR{$k \geq 0$}
    \STATE For each $w_k$, sample $x_k$ and $\bu_k$.
    \STATE Compute $s_\mu(w_k) := \frac{F(w_k + \mu \bu_k,x_k) - F(w_k,x_k)}{\mu}\bu_k$.
    \STATE Update $w_{k+1} = w_k - h_k s_\mu(w_k)$.
    \ENDFOR
  \end{algorithmic}
\end{algorithm}

We present an analysis for nonconvex functionals $F(w,x)$ and $f(w) = \E_x[F(w,x)]$. Furthermore, we assume that each $F(\cdot,x) \in C^{0,0}$ with $L_0(F(\cdot,x)) \leq L_0$, and that $f \in C^{0,0}$. 
\begin{thm}
  \label{thm:convergence}
  Assume a sequence $\{w_k\}_{k \geq 0}$ be generated by Algorithm \ref{alg:SZO-SP}. Let $f(w) \geq f^\ast$, $\forall w \in \R^n$, and define $\bar{n} \geq \E_{x_k}[\bar n(x_k)], \; \forall k \geq 0$ and $S_N := \sum_{k=0}^N h_k$. Furthermore, let $L_1$ denote the Lipschitz constant of $\nabla f_\mu(w)$, and let $\bar{\mathcal{U}}_k = (\bu_0, \ldots, \bu_k)$ and $\mathcal{X}_k =  (x_0, \ldots, x_k)$. Then for any $N > 0$, we have
  \begin{align}
  \label{eq:bound}
    \frac{1}{S_N} \sum_{k=0}^N h_k \E_{\bar{\mathcal{U}}_k,\mathcal{X}_k }\big[\norm{\nabla{f_\mu(w_k)}^2}\big] \leq
      \frac{1}{S_N} \left(
      (f_\mu(w_0) - f^\ast)
      + \left(\frac{1}{2} L_1 (\bar{n}+4)^2 L_0^2 \right) \sum_{k=0}^N h_k^2
      \right). \\\notag
    \end{align}
  \end{thm}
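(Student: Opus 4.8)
The plan is to run the standard descent argument for stochastic gradient methods on a smooth nonconvex surrogate, exactly in the spirit of \cite{NesterovSpokoiny:15}, but with the dimension-dependent second-moment bound replaced by the sparse bound of Lemma \ref{lemma:u_p}. The three ingredients are: (i) that $f_\mu \in C^{1,1}$ with constant $L_1$, so that the quadratic descent inequality holds; (ii) that the stochastic direction $s_\mu(w_k)$ is an unbiased estimator of $\nabla f_\mu(w_k)$, which is precisely the content of Lemma \ref{lemma:gradients}; and (iii) a uniform bound on $\E[\norm{s_\mu(w_k)}^2]$ obtained from Lipschitz-continuity of $F$ and Lemma \ref{lemma:u_p} with $p = 4$.

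First I would invoke the descent lemma for the $L_1$-smooth function $f_\mu$, namely $f_\mu(w_{k+1}) \le f_\mu(w_k) + \langle \nabla f_\mu(w_k), w_{k+1} - w_k\rangle + \tfrac{L_1}{2}\norm{w_{k+1}-w_k}^2$. Substituting the update $w_{k+1} - w_k = -h_k s_\mu(w_k)$ and taking expectation over $\bu_k, x_k$ conditioned on the history $\bar{\mathcal{U}}_{k-1}, \mathcal{X}_{k-1}$ (which fixes $w_k$), unbiasedness from Lemma \ref{lemma:gradients} turns the linear term into $-h_k \norm{\nabla f_\mu(w_k)}^2$, leaving $\E[f_\mu(w_{k+1})] \le f_\mu(w_k) - h_k\norm{\nabla f_\mu(w_k)}^2 + \tfrac{L_1 h_k^2}{2}\E[\norm{s_\mu(w_k)}^2]$.

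For the second-moment term, Lipschitz-continuity \eqref{eq:lipschitz} gives $\norm{s_\mu(w_k)} \le \tfrac{L_0 \mu \norm{\bu_k}}{\mu}\norm{\bu_k} = L_0\norm{\bu_k}^2$, hence $\norm{s_\mu(w_k)}^2 \le L_0^2 \norm{\bu_k}^4$; applying Lemma \ref{lemma:u_p} with $p=4$ together with the definition $\bar{n} \ge \E_{x_k}[\bar n(x_k)]$ bounds the whole term by $L_0^2(\bar{n} + 4)^2$. Taking total expectations, rearranging, and summing over $k = 0, \ldots, N$ telescopes the function-value differences to $f_\mu(w_0) - \E[f_\mu(w_{N+1})]$; since $f \ge f^\ast$ pointwise implies $f_\mu \ge f^\ast$ (the smoothing is an average of shifted copies of $f$), we may replace $\E[f_\mu(w_{N+1})]$ by $f^\ast$. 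Dividing by $S_N$ then yields \eqref{eq:bound}.

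The main obstacle is step (iii), specifically the passage from the per-input bound $\E_{\bu}[\norm{\bu}^4 \mid x] \le (4 + \bar n(x))^2$ of Lemma \ref{lemma:u_p} to a single factor $(4+\bar{n})^2$ after averaging over $x_k$. Since $t \mapsto (4+t)^2$ is convex, Jensen's inequality gives $\E_{x_k}[(4+\bar n(x_k))^2] \ge (4 + \E_{x_k}[\bar n(x_k)])^2$, so merely assuming $\bar{n} \ge \E_{x_k}[\bar n(x_k)]$ does not suffice to pass the expectation inside; the clean fix is to read $\bar{n}$ as a uniform (almost-sure) upper bound on $\bar n(x_k)$, so that $\E_{x_k}[(4+\bar n(x_k))^2] \le (4+\bar{n})^2$ holds directly. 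Everything else is routine Nesterov--Spokoiny bookkeeping, the only genuinely new content being that the sparse Lemma \ref{lemma:u_p} replaces $n$ by $\bar{n}$ throughout.
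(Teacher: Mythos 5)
Your proposal is correct and follows the paper's proof essentially step for step: the $L_1$-descent inequality for $f_\mu$, unbiasedness of $s_\mu(w_k)$ via Lemma \ref{lemma:gradients}, the bound $\E[\norm{s_\mu(w_k)}^2] \leq L_0^2(\bar{n}+4)^2$ from Lipschitz continuity and Lemma \ref{lemma:u_p} with $p=4$, and then telescoping and dividing by $S_N$. The one point where you go beyond the paper is in flagging the Jensen-direction issue when averaging $(4+\bar{n}(x_k))^2$ over $x_k$ --- the paper's proof makes the same silent leap from $\bar{n} \geq \E_{x_k}[\bar{n}(x_k)]$ to the factor $(\bar{n}+4)^2$, so your observation that $\bar{n}$ should really be read as an almost-sure bound (or that one should assume $\bar{n}^2 \geq \E_{x_k}[(\bar{n}(x_k)+4)^2]$ directly) is a legitimate, if minor, tightening of the argument.
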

\begin{proof}
  The proof uses the fact that for a Gaussian approximation $f_\mu(w)$, its gradient is Lipschitz-continuous even if the gradient of $f(w)$ is not. For Lipschitz constant $L_1$ of $\nabla f_\mu(w)$, we have
  \begin{align*}
    f_\mu(w_{k+1}) & - f_\mu(w_k) - \left< \nabla f_\mu(w_k), w_{k+1} - w_k \right> \leq \frac{1}{2} L_1 \norm{w_k - w_{k+1}}^2.
  \end{align*}
  Applying an update step of Algorithm \ref{alg:SZO-SP} lets us rewrite $w_{k+1}$ as $w_k - h_k s_\mu(w_k)$, leading to
  \begin{align*}
    f_\mu(w_{k+1}) & \leq  f_\mu(w_k) - h_k \left< \nabla f_\mu(w_k), s_\mu(w_k) \right>  + \frac{1}{2} L_1 h_k^2 \norm{s_\mu(w_k)}^2.
  \end{align*}
  Taking expectations in $\bu_k$ and $x_k$, we can apply equation~\eqref{eq:2-point}
  , and get
  \begin{align*}
    \E_{\bu_k,x_k}[f_\mu(w_{k+1})] & \leq  f_\mu(w_k) - h_k \norm{\nabla f_\mu(x_k)}^2  + \frac{1}{2} L_1 h_k^2 \norm{s_\mu(w_k)}^2.
  \end{align*}
  
  Furthermore, the expected squared norm of $s_\mu(w_k)$ can be bounded by
  \begin{align*}
    \E_{\bu_k,x_k} \big[\norm{s_\mu(w_k)}^2\big] & = \frac{1}{\mu^2} \E_{\bu_k,x_k} \big[\left( F(w_k + \mu \bu_k,x_k) - F(w_k,x_k) \right)^2 \norm{\bu_k}^2\big] \\
    & \leq \frac{1}{\mu^2} \E_{\bu_k,x_k} \big[\norm{w_k + \mu \bu_k - w_k}^2 L_0^2 \norm{\bu_k}^2\big] \\
    & = \E_{\bu_k,x_k} \big[L_0^2 \norm{\bu_k}^4\big] \leq L_0^2 (\bar{n}+4)^2.
  \end{align*}
The first inequality follows by the assumption of Lipschitz continuity of all $F(\cdot,x) \in C^{0,0}$ with $L_0(F(\cdot,x)) \leq L_0$. The second inequality follows by applying Lemma \ref{lemma:u_p} for $p=4$ and taking the expectation $\E_{x_k}[\bar{n}(x_k)]$ whose upper bound is denoted by $\bar{n}$. 
This yields the following inequality
  \begin{align*}
    \E_{\bu_k,x_k}[f_\mu(w_{k+1})] & \leq  f_\mu(w_k) - h_k \norm{\nabla f_\mu(x_k)}^2 + \frac{1}{2} L_1 L_0^2 (\bar{n}+4)^2 h_k^2.
  \end{align*}
Taking expectations over $\bar{\mathcal{U}}_k$ and $\mathcal{X}_k $, and summing up over $k=0, \ldots, N$ yields the result.
\end{proof}
Theorem \ref{thm:convergence} gives a non-asymptotic bound on the expected squared gradient norm for any sequence of iterates of Algorithm \ref{alg:SZO-SP}. 
\cite{NesterovSpokoiny:15}, Section 7, furthermore show that by an appropriate choice of learning rates $h_k$ and smoothing parameters $\mu$, the iteration complexity for nonconvex zeroth-order optimization, i.e., the number of iterations necessary to guarantee an accuracy of $\epsilon$ for the expected squared norm of the gradient of $f_\mu$, can be analyzed as $\bigO{(\frac{n^3}{\epsilon^2})}$.
The same algebraic manipulations can be applied to result \eqref{eq:bound} that is adapted to sparse perturbations, leading to the following Corollary:

\begin{corr}
  \label{corr:complexity}
  \begin{align}
    \E_{\bar{\mathcal{U}}_N,\mathcal{X}_N }[\norm{\nabla f_\mu(w_{N})}^2] \leq \epsilon \text{ if } N \geq \bigO{\big(\frac{\bar{n}^3}{\epsilon^2}\big)}.
      \end{align}
\end{corr}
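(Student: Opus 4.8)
The plan is to follow the reasoning of \cite{NesterovSpokoiny:15}, Section~7, with the ambient dimension $n$ replaced throughout by the expected active dimension $\bar{n}$, starting from the non-asymptotic bound \eqref{eq:bound}. First I would specialize \eqref{eq:bound} to a constant step size $h_k \equiv h$, so that $S_N = (N+1)h$ and $\sum_{k=0}^N h_k^2 = (N+1)h^2$. The left-hand side then becomes the uniform average $\frac{1}{N+1}\sum_{k=0}^N \E_{\bar{\mathcal{U}}_k,\mathcal{X}_k}[\norm{\nabla f_\mu(w_k)}^2]$, and \eqref{eq:bound} collapses to
\begin{align*}
\frac{1}{N+1}\sum_{k=0}^N \E[\norm{\nabla f_\mu(w_k)}^2] \leq \frac{f_\mu(w_0)-f^\ast}{(N+1)h} + \frac{1}{2} L_1 (\bar{n}+4)^2 L_0^2\, h.
\end{align*}
As in \cite{NesterovSpokoiny:15}, the single iterate $w_N$ in the statement is to be read as the output of a randomized stopping rule, i.e. an index $R$ drawn uniformly from $\{0,\dots,N\}$ (equivalently with probability $h_k/S_N$), so that $\E_R[\norm{\nabla f_\mu(w_R)}^2]$ equals the average on the left and inherits the same upper bound.

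Next I would optimize the right-hand side over the free step size $h$. Writing $D := f_\mu(w_0)-f^\ast$ and $C := \frac{1}{2} L_1 (\bar{n}+4)^2 L_0^2$, the bound has the form $\frac{D}{(N+1)h} + C h$, which is minimized at $h^\ast = \sqrt{D/((N+1)C)}$ and attains the value $2\sqrt{CD/(N+1)}$. Requiring $2\sqrt{CD/(N+1)} \le \epsilon$ and solving for $N$ gives $N+1 \ge 4CD/\epsilon^2$, so everything reduces to exhibiting the $\bar{n}$-dependence of the constant $C$.

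The only remaining input is the Lipschitz constant $L_1$ of $\nabla f_\mu$, which is not a fixed constant but scales with $\mu$ and with the dimension. Repeating the differentiation argument of Lemma~\ref{lemma:gradients} and using $\E_{\bu}[\norm{\bu}] \le (\E_{\bu}[\norm{\bu}^2])^{1/2} = \sqrt{\bar{n}(x)}$ (the $p=2$ case of Lemma~\ref{lemma:u_p}), one obtains the sparse analog of \cite{NesterovSpokoiny:15}'s estimate, $L_1 \le \frac{\sqrt{\bar{n}}}{\mu}L_0$. Since the smoothing bias is controlled by $|f_\mu(w)-f(w)| \le \mu L_0 \E_{\bu}[\norm{\bu}] \le \mu L_0\sqrt{\bar{n}}$, the natural choice $\mu = \Theta(1/\sqrt{\bar{n}})$ keeps this bias bounded and yields $L_1 = \Theta(L_0\,\bar{n})$. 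Substituting into $C$ gives $C = \frac{1}{2}L_1(\bar{n}+4)^2 L_0^2 = \Theta(\bar{n}^3)$, and hence $N+1 \ge 4CD/\epsilon^2 = \bigO\big(\frac{\bar{n}^3}{\epsilon^2}\big)$, which is the claim.

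The step I expect to be the main obstacle is the bookkeeping of the dimension factors, in particular the coupling between $L_1$ and $\mu$: the factor $(\bar{n}+4)^2$ already contributes $\bar{n}^2$ from the variance bound in Theorem~\ref{thm:convergence}, and the extra factor of $\bar{n}$ that upgrades this to $\bar{n}^3$ comes precisely from $L_1 \propto \sqrt{\bar{n}}/\mu$ together with the bias-driven choice $\mu \propto 1/\sqrt{\bar{n}}$. I would therefore want to verify that every occurrence of $n$ in \cite{NesterovSpokoiny:15}'s Section~7 derivation enters only through a moment $\E_{\bu}[\norm{\bu}^p]$ or $\E_{\bu}[\norm{\bu}]$ of the perturbation, so that replacing each such moment by its sparse counterpart from Lemma~\ref{lemma:u_p} is uniform and legitimate; any stray dependence on the ambient dimension $n$ that does not pass through these moments would break the substitution $n \mapsto \bar{n}$.
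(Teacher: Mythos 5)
Your proposal is correct and follows essentially the same route as the paper's own proof: constant step size, minimization of the resulting bound over $h$, and the sparse smoothness estimate $L_1 \le \frac{\sqrt{\bar{n}}}{\mu}L_0$ combined with the bias-driven choice $\mu \propto 1/\sqrt{\bar{n}}$, which together produce the constant of order $\bar{n}(\bar{n}+4)^2 = \Theta(\bar{n}^3)$ and hence $N \geq \bigO\big(\frac{\bar{n}^3}{\epsilon^2}\big)$. Your explicit treatment of the final-iterate statement via a randomized stopping rule, and your closing caveat about verifying that every dimension dependence enters through a moment $\E_{\bu}[\norm{\bu}^p]$, are points the paper's proof glosses over (it also additionally bounds $f_{\tilde{\mu}}(w_0)-f^\ast \le L_0 R$, which does not affect the $\bar{n},\epsilon$ dependence), but neither changes the substance of the argument.
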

Corollary \ref{corr:complexity} shows that the factor $n^3$ that the $\epsilon$-accuracy of SZO methods for nonconvex optimization suffers in comparison to nonconvex SFO optimization can be reduced to the factor $\bar{n}^3$ that can benefit from strong sparsity patterns. A full proof, adapting the calculations given in \cite{NesterovSpokoiny:15}, Section 7, to our case is given in the supplementary material.

\section{Algorithms for Bandit Structured Prediction}
\label{sec:algo}
\paragraph{Update Rules.} Algorithm~\ref{alg:SZO-SP} defines an update rule by a \emph{two-point function evaluation}. Two-point update rules have been introduced as simultaneous perturbation gradient approximation by \cite{Spall:92}, and have later become standard in gradient-free optimization \cite{GhadimiLan:12,DuchiETAL:15}.

A possibility to reduce feedback complexity, with applications that obtain feedback from humans in mind, is to ask for a boolean-valued, relative comparison of function values. Algorithm~\ref{alg:SZO-SP} can be modified easily to use the following \emph{function comparison} update rule:
\[
\text{ If } \; F(w_k + \mu \bu_k,x_k) < F(w_k,x_k),
\text{ Update } \; w_{k+1} = w_k + \frac{h_k}{\mu} \; \bu_k.
\]
This rule can be seen as an SZO alternative to the dueling bandits algorithm of \cite{YueJoachims:09}.

A similar effect as a two-point function evaluation can be achieved by comparing a one-point function evaluation against a running average of function evaluations performed so far. This technique is known as control variates in Monte Carlo simulation \cite{Ross:13}. The idea is to augment a random variable $X$ whose expectation is sought, by another random variable $Y$ to which $X$ is highly correlated. Let $Y$ denote the control variate, and let $\bar{Y}$ denote its expectation. Then the quantity $X-\,Y + \bar{Y}$ is an unbiased estimator of $\E[X]$. The variance reduction effect of control variates can be seen by computing the variance of this quantity: $\text{Var}(X-Y) = \text{Var}(X) + \text{Var}(Y) -2\text{Cov}(X,Y).$
Choosing a control variate such that $\text{Cov}(X,Y)$ is positive and large enough, the variance of the gradient estimate will be reduced. In our case, the random variable a one-point gradient approximation evaluated at a sampled $x_k$ and $\bu_k$. A well-known control variate from reinforcement learning \cite{Williams:92} incorporates the average cumulative reward (or loss) as so-called baseline into the estimator, yielding a \emph{baseline comparison} update rule:
\[
\text{ Compute } \; Y_k = \frac{1}{k} \sum_{j=1}^k F(w_j + \mu \bu_j,x_j) \; \bu_k,
\text{ Update } \; w_{k+1} = w_k - \frac{h_k}{\mu} \left( F(w_k + \mu \bu_k,x_k) - Y_k \right) \; \bu_k.
\]
Note the similarity of the above rule to the two-point feedback rule where $Y_k$ plays the role of a slowly changing unperturbed function value. A similar rule has been used in gradient-free reinforcement learning \cite{SehnkeETAL:10}.

\paragraph{Linear Structured Prediction Models.} One possibility to instantiate the algorithms described above to structured prediction is to encode a task loss evaluation under MAP prediction as initial function $F$. In this work, we assume MAP prediction under a linear model $\hat{y}(x,w) = \argmax_{y \in \Y(x)} \wtop \phi(x,y)$. Let $\Delta:\Y \rightarrow [0,1]$ be a task loss function for structured prediction, e.g. 1-BLEU score for machine translation, then the initial function $F$ can be defined as 
\begin{align}
  F(w,x) := \Delta(\hat{y}(x, w)).
  \label{eq:map}
\end{align}
This criterion has been used in \cite{McAllesterKeshet:11,KeshetETAL:11}. A similar deterministic criterion has been used for SZO optimization for deep reinforcement learning by \cite{SalimansETAL:17,ManiaETAL:18}.

Note that the MAP criterion \eqref{eq:map} is not only not differentiable, but also not Lipschitz-continuous. This discontinuous criterion can be replaced by the smooth criterion \eqref{eq:annealed} that is be obtained by computing an annealed expected loss criterion under temperature parameter $\gamma \geq 0$:
\begin{align}
  F(w,x) & := \E_{p_{w,\gamma}(y|x)}[\Delta(y)] = \sum_{y \in \Y(x)} \Delta(y) \frac{(\exp \wtop \phi(x,y))^\gamma}{\sum_{y \in \Y(x)} (\exp \wtop \phi(x,y))^\gamma}. 
  \label{eq:annealed}
\end{align}
This criterion approaches criterion \eqref{eq:map} for $\gamma \rightarrow \infty$ (see \cite{SmithEisner:06}), thus both criteria can be used interchangeably in experiments.

\paragraph{Stochastic First-Order Optimization.} In our experiments, we will compare SZO approaches to gradient-based SFO algorithms. The latter will serve as upper bound in terms of convergence speed. SFO algorithms for linear bandit structured prediction have been introduced in \cite{SokolovETALnips:16}. We employ an algorithm that optimizes an expected loss objective $\E_{p(x)p_{w}(y|x)}[\Delta(y)]$ by performing gradient descent on the stochastic gradient $s(w) =  \Delta(y) \frac{\partial \log p_{w}(y|x)}{\partial w}$.

\section{Experiments}
\label{sec:exps}
\paragraph{Experimental Design.}

The structured prediction tasks in our experiments have been established for bandit structured prediction using SFO optimization in \cite{SokolovETALnips:16}. The task of noun-phrase chunking uses high-dimensional, but very sparse, word-based feature representations. The task of statistical machine translation (SMT) is a sequence-to-sequence prediction problem using a small, dense feature representation. The goal of the latter task is to investigate potential gains of using the same, non-differentiable, criteria at training and test time. All tasks are based on linear models.

Training for all tasks was done by supervised-to-bandit conversion where bandit feedback is simulated by evaluating $\Delta$ against gold standard structures which are never revealed to the learner. $\Delta$ is a loss function obtained from a task reward, namely 1-BLEU score at sentence level \cite{NakovETAL:12} for SMT, and 1-F1 score for chunking.

Convergence speed is evaluated by plotting the average cumulative loss against iterations. In our experiments, we use the MAP criterion \eqref{eq:map} and define average cumulative loss/reward at iteration $t$ as
  $
  \overline{\Delta}_t =\frac{1}{t} \sum_{k=1}^t \Delta(\hat{y}_k(x_k,w_k + \mu \bu_k)).
  $
This criterion corresponds to regret under the assumption of zero loss for the optimal model.

Test set evaluation is done by the standard offline evaluation for the respective tasks. Machine translation is evaluated by measuring the corpus-based BLEU score \cite{Papineni:02} against an unseen test set of human reference translations. Chunking is evaluated by F1 score on an unseen test set. The early stopping point for test set evaluation is chosen according to a standard online-to-batch conversion by selecting the model that performs best on development data for final evaluation on test data.
All evaluation results are averaged over three runs with different random seeds.
Further comparison points are out-of-domain lower bounds, in-domain upper bounds, and the SFO algorithm (Expected Loss Minimization) of \cite{SokolovETALnips:16}.

\paragraph{Sparse Models for Noun-Phrase Chunking.} We followed \cite{ShaPereira:03} in applying a linear conditional random field (CRF) model to the noun phrase chunking task on the CoNLL-2000 dataset. The original training set was split into a development set (top 1,000 sent.) and a training set (7,936 sent.); the test set was kept intact (2,012 sent.). Training for bandit learning on the chunking task is done by cold starting the models from $w_0=\mathbf{0}$.

For an input sentence $x$, each CRF node $x^i$ carries an observable word and its part-of-speech tag, and has to be assigned a chunk tag $c^i$ out of 3 labels: \textbf{B}eginning, \textbf{I}nside, or \textbf{O}utside (of a noun phrase). Chunk labels are not nested.  As in \cite{ShaPereira:03}, we use second order Markov dependencies (bi-gram chunk tags), such that for sentence position $i$, the state is $y^i=c^{i-1}c^i$, increasing the label set size from 3 to~9. The model uses feature templates that combine these labels with uni-, bi-, and tri-grams of Part-of-Speech tags, and with uni- and bi-grams of words, leading to high sparsity pattern of on average 0.25\% active features for over 1.5M features on the training set.

\paragraph{Dense Feature Models for SMT.} The learning goal in our SMT experiment is re-ranking of $n$-best translation lists of size 5,000 using a linear combination of 14 dense features. The experiments are based on the  \texttt{cdec} \cite{DyerETAL:10} framework. The experimental setup is French-to-English domain adaptation from Europarl to NewsCommentary domains using the data of \cite{KoehnSchroeder:07}. 

The bandit learning algorithms were initialized with the learned weights of the out-of-domain median model and used 40,444 parallel in-domain sentence pairs. Bandit feedback was simulated by evaluating the sampled translation against the reference using as loss function $\Delta$ a smoothed per-sentence $1 - \textrm{BLEU}$ (zero $n$-gram counts being replaced with $0.01$). The possible range of improvements is given by the difference in performance of $0.257$ BLEU for the out-of-domain model, compared to $0.284$ BLEU for an in-domain model, evaluated on a separate in-domain test set of 2,007 parallel sentence pairs.

\begin{figure}[t]
    \centering
    \includegraphics[width=0.45\columnwidth]{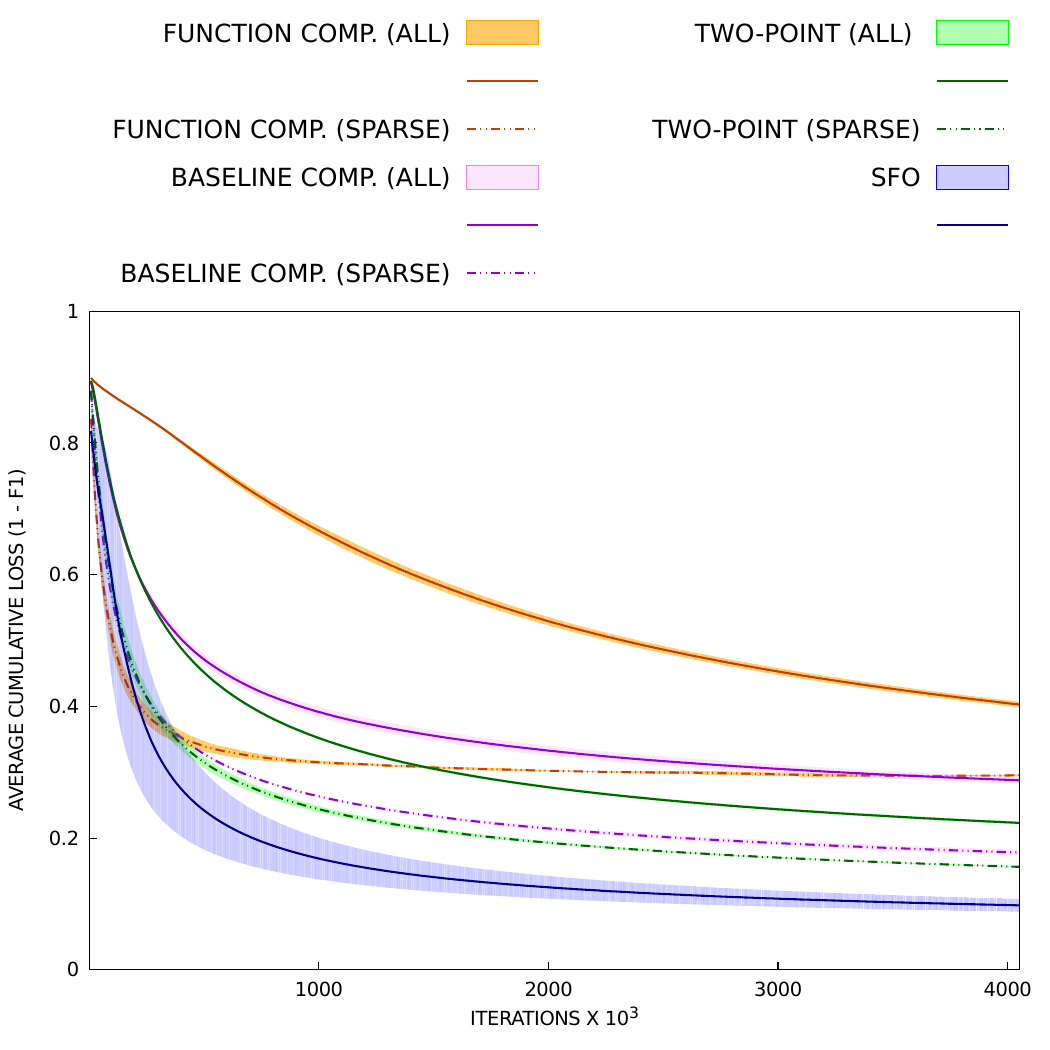}
    \includegraphics[width=0.45\columnwidth]{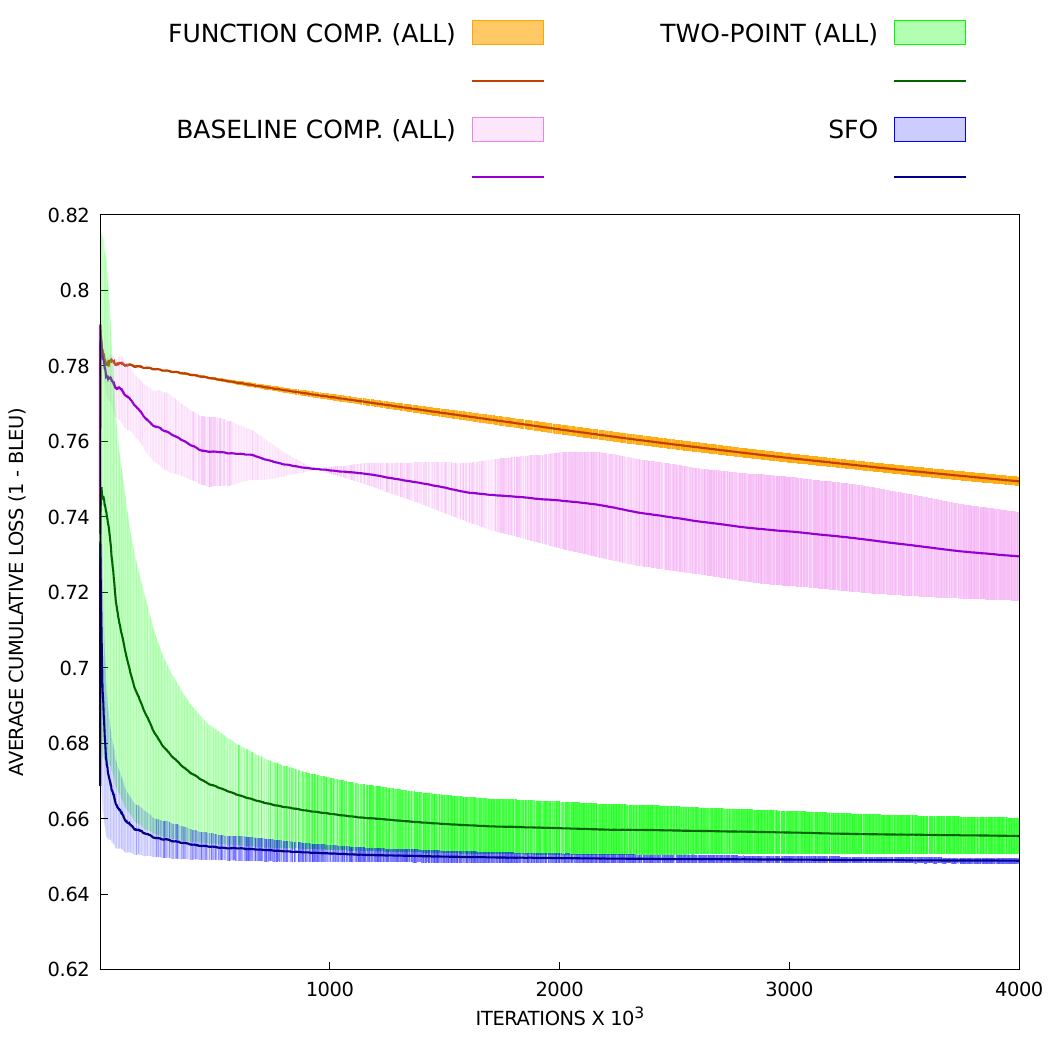}
    \caption{Average cumulative loss on training data for sparse noun phrase chunking (left) and dense statistical machine translation (right). All results were obtained with fixed hyperparameters for each task, averaged over 3 runs with different random seeds for each algorithm, showing mean results in bold lines, 2 standard deviations in filled areas.}
    \label{fig:avg_cum_loss}
\end{figure}

\paragraph{Experimental Results for Chunking.} The left plot in Figure \ref{fig:avg_cum_loss} confirms our theoretical findings by showing faster convergence for {SPARSE} perturbations (dashed curves) over perturbation of {ALL} parameters (solid curves) for each update rule. SPARSE perturbations for \emph{two-point} updates lead to a similar convergence speed as the SFO algorithm, followed by SPARSE perturbations for \emph{baseline comparison} updates, and SPARSE perturbation for \emph{function comparison}. The hyperparameters for all algorithms were kept fixed at constant learning rate $h=0.01$ and exploration parameter $\mu=0.01$.

Test set results for chunking are shown Table \ref{tab:test_results}. The results were obtained by tuning hyperparameters on development sets for constant learning rate $h$ in the range of $10^{-2}$ to $10^{-3}$, and exploration parameter $\mu$ in the range of $10^{-1}$ to $10^{-2}$. Best development settings for all algorithms were obtained close to the time horizon of 4M iterations, showing an undertraining behavior in all cases. However, we find the same relations in test set performance as were obtained for convergence speed, with SPARSE \emph{two-point} updates coming close to the SFO result.

\paragraph{Experimental Results for Machine Translation.} Figure \ref{fig:avg_cum_loss} depicts the convergence behavior of SZO optimization for the dense SMT re-ranking task, using perturbations of ALL features. Out of the SZO update rules, \emph{two-point} update lead to fastest convergence, followed by \emph{baseline comparison} and \emph{function comparison} update rules. All average cumulative loss results were obtained under the same hyperparameter settings of $h=0.001$ and $\mu=0.001$ for all algorithms.

Test set results are shown in \ref{tab:test_results}. Here optimal hyperparameter settings were adjusted on a development set of 1,064 parallel in-domain sentence pairs for constant learning rate $h$ in the range of $10^0$ to $10^{-5}$, and exploration parameter $\mu$ in the range of $10^{-2}$ to $10^{-6}$. Final results are obtained by averaging three independent runs using the hyperparameters found on the development set. Best development settings for all algorithms are obtained at 1-4M iterations, except for two-point feedback that reached an optimum already around 100k iterations.
The goal of the SMT experiment was to show a possible advantage of using the same non-differentiable MAP criterion at training and test time for SZO algorithms. As we can see, all SZO results outperform the SFO result.
In the range of 2.7 BLEU points between out-of-domain and in-domain models, we can achieve an improvement of 1.6 BLEU points by SZO with \emph{two-point} updates, which is also an improvement of 1.0 BLEU points over the SFO Algorithm. 

\begin{table}[t]
  \caption{Test set evaluation for chunking under F1 score, and for machine translation under BLEU.  Results for stochastic learners are shown for SFO, and SZO algorithms with perturbation of all parameters (ALL) or for sparse perturbations of parameters for active features only (SPARSE). All results are averaged over three runs with different random seeds for each algorithm. Best SZO results are shown in bold face; higher results are better.}
\label{tab:test_results}
\centering
\begin{tabular}{lll}
  \toprule
  Update Rule & Test F1 Chunking & Test BLEU SMT \\
\midrule
Function Comparison (ALL) & 0.810 & 0.265 \\
Baseline Comparison (ALL) & 0.819 & 0.266 \\
Two-Point Evaluation (ALL) & 0.841 & \bf{0.273} \\
Function Comparison (SPARSE) & 0.842 & - \\
Baseline Comparison (SPARSE) & 0.869 & - \\
Two-Point Evaluation (SPARSE) & \bf{0.888} & - \\
SFO & 0.908 & 0.263 \\
\bottomrule
\end{tabular}
\end{table}

\paragraph{Further Experiments.} Further experiments using sparse, word-based models for multiclass text classification (which can be viewed as a degenarate structured prediction task) are given in the supplementary material.

\section{Conclusion}
\label{sec:conc}
The theoretical contribution of this paper is to show that the main bottleneck in SZO optimization---the dependency of the iteration complexity on the dimensionality of the function to be perturbed---can be reduced to the expected number of active features in sparse structured prediction scenarios. We presented experimental results on linear structured prediction tasks that confirm our theoretical results. Furthermore, we showed that it can be advantageous to use the same criterion at train and test time, something that is impossible for SFO algorithms if this criterion is non-differentiable.

The experiments in this paper were obtained by perturbing the discontinuous deterministic MAP criterion \eqref{eq:map}. Using a smooth annealed criterion \eqref{eq:annealed} yields similar results, and allows to match our experiments with our (and other existing) theory that assumes at least Lipschitz continuity of perturbed functions. To our knowledge, existing work on SZO optimization for discontinuous functions is concerned with a theory of differentiation and asymptotic convergence results \cite{MoreauAeyels:00,VicenteCustodio:12}, thus non-asymptotic convergence analysis for discontinuous SZO is an interesting open problem.

\subsubsection*{Acknowledgments.}

This research was supported in part by the German research foundation (DFG), and in part by a
research cooperation grant with the Amazon Development Center Germany.

\bibliographystyle{apalike}
\bibliography{references}

\section*{Supplementary Material}
\appendix
\section{Theorems and Proofs}

\newtheorem{innercustomlemma}{Lemma}
\newenvironment{customlemma}[1]
  {\renewcommand\theinnercustomlemma{#1}\innercustomlemma}
  {\endinnercustomlemma}

\newtheorem{innercustomcorr}{Corollary}
\newenvironment{customcorr}[1]
  {\renewcommand\theinnercustomcorr{#1}\innercustomcorr}
  {\endinnercustomcorr}

\begin{customlemma}{1}
  \begin{align}
    \nabla_w f_\mu(w) & = \E_{\bu,x}[\frac{F(w + \mu \bu,x) - F(w,x)}{\mu}\bu]. \label{eq:2-point}
    \end{align}
  \end{customlemma}

\begin{proof}
\begin{align*}
\nabla_w f_\mu(w) &= \nabla_w \E_x \left[\int_{\bu} \N(\bu)\ F(w+\mu \bu,x)\ d\bu\ \right]\\
&= \E_x \left[ \int_{\bu} \ \nabla_w \N(\bu)\ F(w + \mu \bu,x)\ d\bu\ \right]\\
&= \E_x \left[\int_{\bu} \nabla_w\frac{1}{\sqrt{(2\pi)^n}}e^{-\frac{1}{2}||\bu||^2} F(w + \mu \bu,x) d\bu\ \right]\\
&= \E_x \left[\int_y \nabla_w\frac{1}{\sqrt{(2\pi)^n}}e^{-\frac{1}{2}||\frac{y - w}{\mu}||^2} F(y,x) \cdot \frac{1}{\mu^n}dy\ \right] \;\;\; \text{where $y=w+\mu \bu$} \\
&= \E_x \left[\int_y 1\cdot\frac{y-w}{\mu^2}\frac{1}{\sqrt{(2\pi)^n}}e^{-\frac{1}{2}||\frac{y - w}{\mu}||^2} F(y,x) \cdot\frac{1}{\mu^n}dy\ \right]\\
&= \E_x \left[\int_{\bu} \frac{\bu}{\mu}\frac{1}{\sqrt{(2\pi)^n}}e^{-\frac{1}{2}||\bu||^2} F(w+\mu \bu,x) d\bu\ \right]\\
&= \E_x \left[\int_{\bu} \frac{\bu}{\mu}\ \N(\bu)\ F(w+\mu \bu,x) d\bu\ \right]\\
&= \E_x \left[\int_{\bu} \frac{\bu}{\mu}\ \N(\bu)\ F(w+\mu \bu,x) - F(w,x) d\bu\ \right].
\end{align*}
The last line follows since $\E_{x,\bu}[\frac{F(w,x)}{\mu}\bu] = \frac{F(w,x)}{\mu}\E_{\bu,x}[\bu]=0$.
\end{proof}

\begin{customcorr}{1}
  \begin{align}
    \E_{\bar{\mathcal{U}}_N,\mathcal{X}_N }[\norm{\nabla f_\mu(w_{N})}^2] \leq \epsilon \text{ if } N \geq \bigO{\big(\frac{\bar{n}^3}{\epsilon^2}\big)}.
      \end{align}
\end{customcorr}

\begin{proof}
  Our goal is to bound the terms on the righthandside of Theorem 1 in order to classify Algorithm 1 by the number of iterations necessary to guarantee an accuracy of $\epsilon$ for the expected squared norm of the gradient of $f_\mu$. The inequality is repeated here:
  \begin{align}
    \label{thm1}
  \frac{1}{S_N} \sum_{k=0}^N h_k \E_{\bar{\mathcal{U}}_k,\mathcal{X}_k }\big[\norm{\nabla{f_\mu(w_k)}^2}\big] \leq \frac{1}{S_N} \left( (f_\mu(w_0) - f^\ast) + \left(\frac{1}{2} L_1 (\bar{n}+4)^2 L_0^2 \right) \sum_{k=0}^N h_k^2 \right).
  \end{align}
  
  We follow \cite{NesterovSpokoiny:15} in assuming a constant learning rate $h_k := h, \; k \geq 0$. 

  We bound the approximation gap between the original function $f$ and the smoothed approximation $f_\mu$ by $\alpha$ choosing $\mu \leq \tilde{\mu} = \frac{\alpha}{\bar{n}^{1/2}L_0}$. The latter is possible by applying \cite{NesterovSpokoiny:15}, eq. (18), to our case s.t. $|f_\mu(w) - f(w)| \leq \mu L_0\bar{n}^{1/2}$.

  Furthermore, we apply \cite{NesterovSpokoiny:15}, eq. (22) to our case s.t. $L_1 = \frac{\bar{n}^{1/2}}{\tilde{\mu}}L_0$ where $L_0(F(\cdot,x)) \leq L_0$ for all $F(\cdot,x), x \in \mathcal{X}$.
  
  Let $S_N = \sum_{k=0}^N h_k = (N+1)h$, then the right-hand side of eq. \eqref{thm1} becomes
  \begin{align}
    & \frac{1}{(N+1)h}  \left((f_{\tilde{\mu}}(w_0) - f^\ast)  + \frac{1}{\tilde{\mu}}
    \bar{n}^{1/2} (\bar{n}+4)^2 L_0^3 (N+1)h^2 \right) \\
    & = \frac{1}{(N+1)h} \left((f_{\tilde{\mu}}(w_0) - f^\ast) + \frac{\bar{n}^{1/2} L_0}{\alpha} \bar{n}^{1/2} (\bar{n}+4)^2 L_0^3 (N+1)h^2 \right) \\
    & = \frac{f_{\tilde{\mu}}(w_0) - f^\ast}{(N+1)h} + \frac{h}{\alpha}\bar{n} (\bar{n}+4)^2 L_0^4 \\
    & \leq \frac{L_0 R}{(N+1)h} + \frac{h}{\alpha}\bar{n} (\bar{n}+4)^2 L_0^4. 
  \end{align}
  The last inequality follows from Lipschitz continuity s.t. $f_{\tilde{\mu}}(w_0) - f^\ast \leq L_0 \norm{w_0 - w^\ast}$ and the additional assumption of a bound $\norm{w_0 - w^\ast} \leq R$. Minimizing this upper bound in $h$, by taking the first derivative and setting it to zero, gives
  \begin{align}
    h^\ast = \left( \frac{\alpha R}{\bar{n}(\bar{n}+4)^2 L_0^3 (N+1)} \right)^{1/2}.
    \end{align}
  Plugging this back into the upper bound gives
  \begin{align}
    & \frac{L_0 R}{(N+1)h^\ast} + \frac{h^\ast}{\alpha}\bar{n} (\bar{n}+4)^2 L_0^4 \\
    & = 2 \left(\frac{\bar{n}(\bar{n}+4)^2 L_0^5 R}{\alpha (N+1)} \right)^{1/2}.
    \end{align}
    Thus, in order to guarantee an $\epsilon$-accuracy on the left-hand side of eq. \eqref{thm1}, we need $N \geq \bigO(\frac{\bar{n}^3}{\epsilon^2})$ iterations.
  \end{proof}

\begin{figure}[t]
    \centering
  \includegraphics[width=0.5\columnwidth]{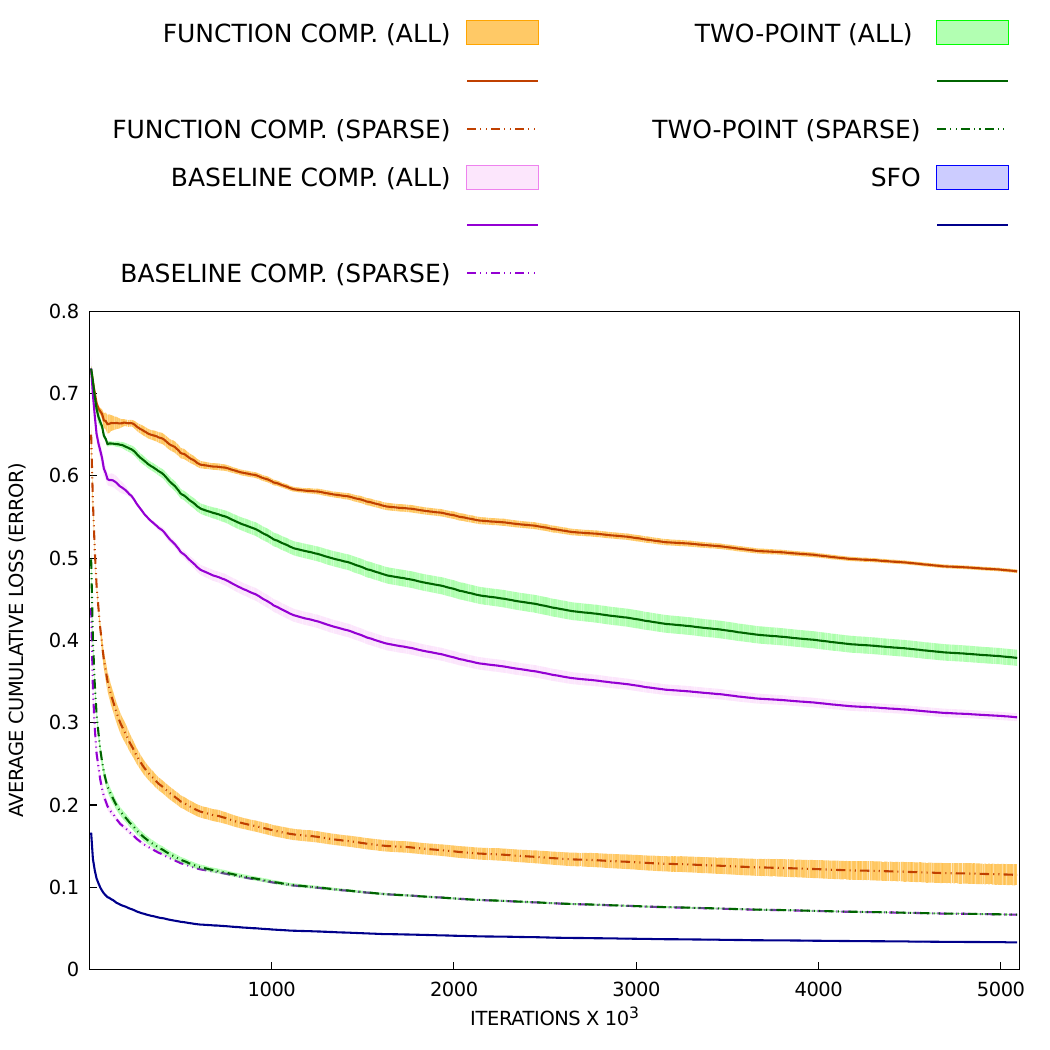}
    \caption{Average cumulative loss on training data for sparse multiclass text classification. All results are averaged over 3 runs, with mean results in bold lines, 2 standard deviations in filled areas.}
    \label{fig:avg_loss_multiclass}
\end{figure}

\section{Further Experiments}

\paragraph{Sparse Models for Multiclass Text Classification.} Multiclass text classification uses a sparse linear feature representation on the Reuters RCV1 dataset \cite{LewisETAL:04}. This dataset a standard benchmark for (simplified) structured prediction that has been used in a bandit setup by \cite{KakadeETAL:08}. The simplified problem uses a binary $\Delta$ function indicating incorrect assignment of one out of 4 classes.  The data were split into a training set (509,381 documents from original \texttt{test\_pt[0-2].dat} files), a development set (19,486 docs: every 8th entry from \texttt{test\_pt3.dat} and a test set (19,806 docs from \texttt{train.dat}). Training for bandit learning is done by cold starting the models from $w_0=\mathbf{0}$. Meta-parameter settings were determined on development sets for constant learning rate $h$ in the range of $10^0$ to $10^{-5}$, and exploration parameter $\mu$ in the range of $10^{-2}$ to $10^{-6}$.
   
Following~\cite{KakadeETAL:08}, we used documents with exactly one label from the set of labels \textsc{\{ccat, ecat, gcat, mcat\}} and converted them to \emph{tfidf} word vectors of dimension 227,903 on the training set.
This \emph{tfidf} conversion yields very sparse features with a sparsity pattern of on average 0.5\%.

As shown in Figure \ref{fig:avg_loss_multiclass}, best convergence behavior for sparse multiclass text classification is obtained by the SFO method which functions as upper bound for the SZO methods. Among SZO methods we see a clear grouping of algorithms with SPARSE perturbation of active features only (dashed curves) and standard SZO methods where ALL parameters are perturbed (dense curves), with a clear advantage in convergence speed for the former. A comparison of the different SZO update rules defined in Section \ref{sec:algo} shows a similar ranking to the experiments described above, with updates based on (SPARSE) two-point function evaluation converging fastest.

\end{document}